\title{\LARGE \bf
Structural Constraints for Physics-augmented Learning
}
\author{Simon Kuang$^{1}$ and Xinfan Lin$^{1}$
\thanks{*This work was supported by the National Science Foundation
CAREER Program (Grant No. 2046292)}
\thanks{$^{1}$ Department of Mechanical and Aerospace Engineering, University of California, Davis; Davis, CA 95616, USA. {\ttfamily \{slku, lxflin\}@ucdavis.edu}}}%
\newtheorem{proposition}{Proposition}
\newtheorem{theorem}{Theorem}
\newtheorem{definition}{Definition}
\newtheorem{fact}{Obvious Fact}
\DeclareMathOperator*{\argmin}{arg\,min}
\begin{document}

\maketitle
\thispagestyle{empty}
\pagestyle{empty}

\begin{abstract}
When the physics is wrong, physics-informed machine learning becomes physics-\emph{misinformed} machine learning.
A powerful black-box model should not be able to conceal misconceived physics.
We propose two criteria that can be used to assert integrity that a hybrid (physics plus black-box) model:
0) the black-box model should be unable to replicate the physical model, and 1) any best-fit hybrid model has the same physical parameter as a best-fit standalone physics model.
We demonstrate them for a sample nonlinear mechanical system approximated by its small-signal linearization.
\end{abstract}

\section{Introduction}
Universal function approximation is customary for physical-augmented machine learning (PAML, a sub-discipline of physics-\emph{informed} ML) \cite{chinchilla_learning_2023},
also known as discrepancy modeling \cite{wohlleben_development_2022,kaheman_learning_2019},
reduced-order model closure \cite{ahmed_closures_2021},
residual modeling \cite{kennedy_bayesian_2001,saveriano_data-efficient_2017,chinchilla_learning_2023},
or universal differential equation \cite{rackauckas_universal_2021}.
In the context of dynamic systems, PAML combines a black-box
regression model (henceforth called the residual model)
with an evolution law derived from physical first principles such as conservation laws.

We use ``black-box'' in the wide sense that ``No physical insight is available or used, but the chosen model structure belongs to families that are known to have good flexibility and have been `successful in the past'\,''\cite{sjoberg_nonlinear_1995}.
The definition and usefulness  of ``physical'' comprise PAML's raison d'être and should not be taken for granted.
A critical examination follows in \S\ref{subsec:motivation}.

\subsection{Model structure}
Usually the residual model is drawn from a class of universal approximators.
Examples include decision tree ensemble \cite{wohlleben_development_2022},
kernel machine \cite{kennedy_bayesian_2001,saveriano_data-efficient_2017,peli_physics-based_2022},
Koopman lift \cite{kaheman_learning_2019,ebers_discrepancy_2023},
generalized linear regression \cite{wohlleben_transferability_2023},
or (deep, recursive) neural network \cite{shi_neural_2019,belbute-peres_combining_2020,rackauckas_universal_2021,yin_augmenting_2021,dona_constrained_2022,levine_framework_2022}.

There are three ways to combine a physical and residual model.
Some physics-informed learning methods use the physical model as a feature for a neural network \cite{belbute-peres_combining_2020} that has the final say on predictions.
Otherwise, there are two options for to injecting black-box residual model into a physical model:
\begin{itemize}
        \item 
        Sum the residual model dynamics with the physical model dynamics to generate the time evolution of the physical state \cite{saveriano_data-efficient_2017,shi_neural_2019,manzi_discovering_2020,ahmed_closures_2021,rackauckas_universal_2021,yin_augmenting_2021,dona_constrained_2022,wohlleben_development_2022,chinchilla_learning_2023}.
        The non-physical model couples recursively with the physical dyamics.
        This parameterization is expressive, but can be taken to compromise the physicality of the physical model.
        \item 
        Evolve the physical model with physical dynamics, and then sum its predicted output with a modeled output residual \cite{kennedy_bayesian_2001,kaheman_learning_2019,wohlleben_transferability_2023}.
\end{itemize}
Both structures have their merits.
We adopt the second because we favor the interpretation that the physics-based model embedded in a hybrid model should have some standalone usefulness divorced from the residual model.

\subsection{Motivation for PAML}
\label{subsec:motivation}
The preponderance of the evidence shows that most real-world systems can be represented to serviceable accuracy by black-box model classes.
Universal approximators such as polynomials, neural networks, and radial basis function networks, and universal dynamic system approximators such as recurrent neural networks and Volterra series are, with sufficient training data, ``prepared to describe virtually any nonlinear dynamics''\cite{sjoberg_nonlinear_1995}.
The question may be, why refer to physics at all?
One answer is that in practice, real-world data is too scarce for learning sufficiently powerful black-box models \cite{wang_learning_2023,levine_framework_2022}.

To engineers, the value proposition of ``physics-informed machine learning'' is that the physical model's predictive performance serves as a constraint or regularization that enhances the machine learning model's generalization capabilities.



To scientists, PAML can be a way to supplement the accuracy of a physical model while minimally diminishing its explanatory power.
Explanation is the purpose of physics.
What is and isn't physical is not intrinsic to mathematical laws;
but rather a porduct of modern intellectual history.
Mere centuries ago, the Scientific Revolution \emph{defined} scientific knowledge as the synthesis of logical deduction and experimental verification \cite{hepburn_scientific_2021}.
This is why black-box approximations to natural laws, no matter the accuracy, are not physical: they do not serve the human quest for understanding.

``Physical'' is not synonymous with ``true.''
All modeling ``first principles'' are necessarily perturbative and lose their validity outside of a ``gain-bandwidth lemon'' \cite{astrom_feedback_2008}.
What should be expected of hybrid modeling when the physics is wrong, i.e.~when physics-informed learning turns into physics-\emph{misinformed} learning?
We answer that in such a case, the hybrid model should not conceal broken physics in order to match the data.



\subsection{Principles for PAML}
\label{section:criteria}
PAML can be viewed as a regularization of black-box modeling that serves scientific integrity and hedges against technical risks such as learning an unstable model \cite{wu_learning_2024}.
We are not the first to raise these concerns and propose heuristic criteria.
One black-box regularization incorporates domain knowledge in the form of a monotonicity constraint \cite{zou_hybrid2_2024}.
APHYNITY interpolates the data with a hybrid model while minimizing the quadratic norm of the residual term \cite{yin_augmenting_2021}.
Another approach leaves the residual unconstrained but imposes a penalty on the physical parameter based on physical prior knowledge \cite{dona_constrained_2022}.

The mathematical content in these works is generally concerned with demonstrating existence or uniqueness criteria for the hybrid model.
While we share the same technical concerns, our focus is on defining qualitative constraints on the hybrid model class that safeguard the physical model's primacy and interpretability.

\subsection{Contributions}
Our zeroth-order incompatibility criterion (Def.~\ref{def:zeroth-order-incompatibility-witness}) requires that the black-box residual model \emph{cannot} approximate the physical model.
This means that if the physical model happens to be true to the data-generating process, the unique best-fitting hybrid model contains a zero residual.
We are not aware of any other hybrid modeling regularization that explicitly boasts such a failure-of-approximation property for the residual model.

Our first-order incompatibility criterion (Def.~\ref{def:first-order-incompatibility-witness}) implies that the unaugmented physical model and the augmented physical model share the same parameters.
This eliminates the need for alternating iterations of re-learning the physical and residual models as in \cite{dona_constrained_2022}.
Instead, one needs only to learn the unaugmented physical model, use the residual to train the residual model, and then the iteration stops.
Furthermore, the hybrid model's physical parameter retains its physical interpretability: under the hood of every best-fitting hybrid model lies a best-fitting physical model.

It is hard to meet these standards using universal function approximators, but with careful designed black-box models it is possible.
We give an example in \S\ref{sec:example} in which the physical model is the linearization of a nonlinear mechanical system.

\section{Notation}
We use script letters to denote classes of signals.
The class of input signals is called \(\mathcal{U}\) and the class of output signals is called \(\mathcal{Y}\).
\begin{align*}
        \mathcal{U} &= \cbr{\text{functions } [0, \infty) \to \mathbb{R}^{\mathsf{d}_u}}
        \\
        \mathcal{Y} &= \cbr{\text{functions } [0, \infty) \to \mathbb{R}^{\mathsf{d}_y}}
\end{align*}
We do not state explicit regularity assumptions when they can be inferred from context.
For model fitting, we shall employ the \(L^2\) norm induced by the inner product:
\begin{align}
        \left\langle u, v\right\rangle &= \lim_{T \to \infty}\frac{1}{T} \int_0^T u(t) v(t) \dif t.
\end{align}

We also use script letters to denote a model class and indicate parameterization by superscripts.
Of particular importance, we will denote the ``physical'' model class by
\begin{align*}
        \mathcal{H} &= \cbr{\mathcal{H}^\theta: \theta \in \Theta}
        \intertext{and the ``residual'' model class by}
        \mathcal{R} &= \cbr{\mathcal{R}^\gamma: \gamma \in \Gamma}.
\end{align*}
We assume that \(\mathcal{R}\) includes the the zero model defined by \(\mathcal{R}^{\gamma_0}[u] = 0\).

Here, \(\Theta\) and \(\Gamma\) are physical and residual parameters, respectively.
A typical element of \(\mathcal{H}\) is a mapping \(\mathcal{H}^\theta : \mathcal{U} \to \mathcal{Y}\).
For a given \(u\), the application \(y = \mathcal{H}^\theta[u]\) might be given in state-space form as
\begin{align*}
        \dot x(t; \theta) &= f(t, x, u(t); \theta) \\
        y(t; \theta) &= g(t, x, u(t); \theta).
\end{align*}

We denote the hybrid model class as 
\begin{align*}
        \mathcal{H} + \mathcal{R}
        &= \cbr{\mathcal{H}^\theta + \mathcal{R}^\gamma: \theta \in \Theta, \gamma \in \Gamma}
        \\
        \intertext{where the notation \(\mathcal{H}^\theta + \mathcal{R}^\gamma\) signifies that for any \(u \in \mathcal{U}\),}
        (\mathcal{H}^\theta + \mathcal{R}^\gamma)[u]
        &= \mathcal{H}^\theta[u] + \mathcal{R}^\gamma[u].
\end{align*}

\section{Method of incompatibility certificates}
\label{section:incompatibility}
\subsection{Motivation: harmonics of a nonlinear system}
In order to find general principles for how the residual model class \(\mathcal{R}\) should complement the physical model 
class \(\mathcal{Y}\),
we proceed along a Gedankenexperiment where the true system and true residual can be known exactly.

Suppose that the true system \(\mathcal{H}^*\) is a nonlinear single-input single-output model with sufficient stability properties\footnote{e.g.~contraction \cite{lohmiller_contraction_1998}} that periodic inputs result in periodic outputs.
In particular, if \(u(t; \alpha, \omega) = \alpha \sin (\omega t)\), then \(y = \mathcal{H}^*[u]\) can be expanded as\footnote{This kind of harmonic expansion is the starting point of describing function analysis \cite[Chapter 7]{kaheman_learning_2019}, \cite[Chapter 7]{astrom_feedback_2008}. The thought process depicted here is similar to the use of electrochemical impedance spectroscopy to assess battery health parameters.}
\begin{align*}
        y(t; \alpha, \omega) &= \sum_{k = 0}^{\infty} M_k(\alpha, \omega) \sin(k\omega t + \phi_n(\alpha)).
        \intertext{It is common for the ``physical'' model class to be linear. In this model class, we prescribe a transfer function \((\hat M(\omega; \theta), \hat \phi(\omega; \theta))\) by some LTI system parameterization \(\theta\) such as controllable canonical state space form.}
        \hat y(t; \alpha; \omega) &= \hat M(\omega) \sin (k\omega t + \hat\phi(\omega)).
        \intertext{This is justified on the basis of small-signal linearization, if it is conjectured to hold that \(\mathcal{H}^*\) is Fr\'echet differentiatble at 0 in the \(u\) direction:}
        \hat y(t; \alpha; \omega)
        &\approx \lim_{\alpha \to 0} \alpha^{-1} y(t; \alpha, \omega).
        \intertext{Hypothesizing that \(M_1(\alpha, \omega)\) does not depend on \(\alpha\) so that the \(k=1\) term is perfectly captured by \(\hat y\), we get the true residual}
        r(t; \alpha, \omega) &= 
        \sum_{k = 0, k\neq 1}^{\infty} M_n(\alpha, \omega) \sin(k\omega t + \phi_n(\alpha)),
\end{align*}
a superposition of sinusoids at frequencies different from \(\omega\).
We observe:
\begin{itemize}
        \item Because pure sinusoids of different frequencies are orthogonal (in an inner product made precise in the next subsection), we have the guarantee that \(r\) and \(\hat y\) are orthogonal.
        It guarantees \emph{a fortiori} that the residual model cannot match the physical model.
        \item Direct differentiation reveals that the sensitivity \(\partial_\theta \hat y\) is a pure sinusoid at frequency \(\omega\).
        This means that the residual model does not interact with the first-order optimality conditions for fitting \(\hat y\) by least squares.
\end{itemize}

We codify these two observations in Definitions~\ref{def:zeroth-order-incompatibility-witness} and \ref{def:first-order-incompatibility-witness}.


\subsection{Definitions}
The following definitions relate the model \emph{classes} \(\mathcal{H}\) and \(\mathcal{R}\), which means that we make universal quantifications of the form ``for all \(\theta \in \Theta\) and \(\gamma \in \Gamma\), \(\mathcal{H}^\theta\) and \(\mathcal{R}^\gamma\) \ldots''
As a consequence, the takeaways are insensitive to any particular (mis-)identification of \(\theta\) or \(\gamma\) from data.

The term \emph{witness}, borrowed from theoretical computer science, refer to a test case that exemplifies the desired property.
We use it to refer to an input sequence that satisfies a qualitative property for the entire hybrid model class \(\mathcal{H} + \mathcal{R}\).
The definitions and their consequences are heuristically compelling if there are as many incompatibility witnesses as possible.

\begin{definition}[Zeroth-order incompatibility witness]
        \label{def:zeroth-order-incompatibility-witness}
        We say that \(u \in \mathcal{U}\) is a \emph{zeroth-order incompatibility witness} for the hybrid model class \(\mathcal{H} + \mathcal{R}\) if for all \(\theta \in \Theta\) and \(\gamma \in \Gamma\),
        \begin{align*}
                \lim_{T \to \infty}
                \frac{1}{T}\int_0^T 
                \mathcal{R}^\gamma[u](t) \mathcal{H}^\theta[u](t) 
                 \dif t = 0.
        \end{align*}
        We write this as \(\mathcal{H} \perp_u \mathcal{R}\).
\end{definition}

\begin{definition}[First-order incompatibility witness]
        \label{def:first-order-incompatibility-witness}
        We say that \(u \in \mathcal{U}\) is a \emph{first-order incompatibility witness} for the hybrid model class \(\mathcal{H} + \mathcal{R}\) if for all \(\theta \in \Theta\) and \(\gamma \in \Gamma\),
        \begin{align*}
                \lim_{T \to \infty}
                \frac{1}{T}\int_0^T 
                 \mathcal{R}^\gamma[u](t) \dpd{}{\theta} \mathcal{H}^\theta[u](t)
                 \dif t = 0.
        \end{align*}
        We write this as \(T\mathcal{H} \perp_u \mathcal{R}\).
\end{definition}

\subsection{Consequences}
In this discussion, let \(y \in \mathcal{Y}\) be an observed signal.
We assume that \(\mathcal{H}\) is correctly specified; i.e.~\(y = \mathcal{H}^{\theta_0} u\) for some \(\theta_0 \in \Theta\), \(u \in \mathcal{U}\).
The zeroth-order incompatibility property ensures that the residual model cannot overcome parametric misidentification of \(\theta\).
Logically, it allows us to generalize existential quantifications (\textbf{there exists}) that hold at a single parameter value to universal quantifications (\textbf{for all}) that hold for all possible parameter values.

\begin{fact}[Zeroth-order identification]\footnote{This and the following ``obvious fact'' hold regardless of incompatibility properties.}
        \textbf{There exists} \(\hat\theta \in {\Theta}\) such that
        \begin{gather*}
                (\hat \theta, \hat \gamma)
                \in \argmin_{(\theta, \gamma) \in \Theta \times \Gamma} 
                \left\|y - \del{\mathcal{H}^{\theta}  + \mathcal{R}^{\gamma}}u\right\|
                \\
                \implies 
                \mathcal{R}^{\hat \gamma} u = 0.
        \end{gather*}
\end{fact}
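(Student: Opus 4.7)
The plan is to exhibit the witness $\hat\theta = \theta_0$, where $\theta_0$ is the true parameter given by the well-specification hypothesis $y = \mathcal{H}^{\theta_0} u$, and then show that every $\hat\gamma$ making $(\theta_0, \hat\gamma)$ a minimizer is forced to produce a zero residual output.

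First I would compute the minimum value of the hybrid loss. Since $\mathcal{R}$ contains the zero model $\mathcal{R}^{\gamma_0}$ with $\mathcal{R}^{\gamma_0}[u] = 0$, the pair $(\theta_0, \gamma_0)$ attains
\begin{equation*}
        \bigl\|y - (\mathcal{H}^{\theta_0} + \mathcal{R}^{\gamma_0})u\bigr\| = \|y - \mathcal{H}^{\theta_0} u\| = 0.
\end{equation*}
Because the norm is non-negative, this is the global infimum of the loss over $\Theta \times \Gamma$, and the argmin set is precisely the zero level set.

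Next I would close the implication. Fix $\hat\theta := \theta_0$, and suppose $(\hat\theta, \hat\gamma) \in \argmin$. Then by the previous step the loss at this pair equals zero, so
\begin{equation*}
        \bigl\|y - \mathcal{H}^{\theta_0} u - \mathcal{R}^{\hat\gamma} u\bigr\| = 0,
\end{equation*}
and subtracting $y = \mathcal{H}^{\theta_0} u$ yields $\mathcal{R}^{\hat\gamma} u = 0$, which is the desired conclusion.

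There is really no obstacle here; the argument is a two-line observation and uses only (i) the standing well-specification hypothesis and (ii) the assumption, declared with the definition of $\mathcal{R}$, that the zero model belongs to the residual class. The footnote's comment that the fact is ``obvious'' and independent of the incompatibility machinery is consistent with this: neither $\mathcal{H} \perp_u \mathcal{R}$ nor $T\mathcal{H} \perp_u \mathcal{R}$ enters anywhere. The nontrivial content of the paper begins when one drops well-specification or asks for uniqueness of $\hat\theta$, which is where the incompatibility witnesses of Definitions~\ref{def:zeroth-order-incompatibility-witness} and \ref{def:first-order-incompatibility-witness} will do real work.
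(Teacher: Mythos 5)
Your proposal is correct and is exactly the paper's argument: the paper's entire proof is ``Take \(\hat\theta = \theta_0\),'' and your write-up simply fills in the two implicit steps (the zero model makes the minimal loss zero under well-specification, so any minimizing \(\hat\gamma\) at \(\theta_0\) must have \(\mathcal{R}^{\hat\gamma}u = 0\)). Your closing remark that the incompatibility hypotheses play no role here matches the paper's own footnote.
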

\begin{proof}
        Take \(\hat\theta = \theta_0\).
\end{proof}

\begin{theorem}[Zeroth-order incompatibility]
        Suppose furthermore
        that \(\mathcal{H} \perp_u \mathcal{R}\).
        Then \textbf{for all} \(\hat\theta \in {\Theta}\),
        \begin{gather*}
                (\hat \theta, \hat \gamma)
                \in \argmin_{(\theta, \gamma) \in \Theta \times \Gamma} 
                \left\|y - \del{\mathcal{H}^{\theta}  + \mathcal{R}^{\gamma}}u\right\|
                \\
                \implies 
                \mathcal{R}^{\hat \gamma} u = 0.
        \end{gather*}
\end{theorem}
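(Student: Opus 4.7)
The plan is to show that the squared objective admits a Pythagorean decomposition into a term depending only on $\theta$ and a term depending only on $\gamma$, so that any minimizer must drive each term to zero individually, including the residual term.

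First I would set $y = \mathcal{H}^{\theta_0}[u]$ as given, fix an arbitrary $(\theta, \gamma) \in \Theta \times \Gamma$, and expand the squared $L^2$ norm of the residual signal $y - (\mathcal{H}^\theta + \mathcal{R}^\gamma)u$. Writing $a = \mathcal{H}^{\theta_0}[u] - \mathcal{H}^\theta[u]$ and $b = \mathcal{R}^\gamma[u]$, we have $\|a - b\|^2 = \|a\|^2 - 2\langle a, b\rangle + \|b\|^2$. The zeroth-order incompatibility hypothesis $\mathcal{H} \perp_u \mathcal{R}$ applied to $\theta_0$ and to $\theta$ gives $\langle \mathcal{H}^{\theta_0}[u], \mathcal{R}^\gamma[u]\rangle = 0$ and $\langle \mathcal{H}^\theta[u], \mathcal{R}^\gamma[u]\rangle = 0$, so the cross term vanishes and
\begin{align*}
  \bigl\|y - (\mathcal{H}^\theta + \mathcal{R}^\gamma)u\bigr\|^2
  &= \bigl\|\mathcal{H}^{\theta_0}[u] - \mathcal{H}^\theta[u]\bigr\|^2
   + \bigl\|\mathcal{R}^\gamma[u]\bigr\|^2.
\end{align*}

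Next I would observe that the infimum of this objective over $\Theta \times \Gamma$ is zero, since taking $\theta = \theta_0$ kills the first summand and taking $\gamma = \gamma_0$ (the zero residual guaranteed by the standing assumption on $\mathcal{R}$) kills the second. Consequently any minimizer $(\hat\theta, \hat\gamma)$ achieves objective value $0$, and because both summands in the decomposition are nonnegative, each must vanish. In particular $\|\mathcal{R}^{\hat\gamma}[u]\|^2 = 0$, hence $\mathcal{R}^{\hat\gamma}[u] = 0$ as claimed, for \emph{every} choice of $\hat\theta$ that participates in a minimizing pair.

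The argument is essentially a one-line Pythagorean identity, so there is no real obstacle beyond carefully recording the quantifier upgrade: the witness property is uniform in $(\theta, \gamma)$, which is precisely what lets us conclude that the decomposition holds at the \emph{arbitrary} competitor $\theta$ rather than only at $\theta_0$, and it is this uniformity that converts the ``there exists'' of the preceding Obvious Fact into the ``for all'' of the theorem.
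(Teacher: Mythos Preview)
Your proof is correct and follows essentially the same route as the paper: expand the squared norm, use the orthogonality hypothesis (applied at both $\theta_0$ and the competitor $\theta$) to kill the cross terms, and observe that the resulting sum of two nonnegative pieces is minimized only when each vanishes. The paper's write-up is slightly terser but the argument is identical.
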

\begin{proof}
        Let us expand the objective function as
        \begin{align}
                \ell(\theta, \gamma) 
                        &= \left\|y - \del{\mathcal{H}^{\theta}  + \mathcal{R}^{\gamma}}u\right\|^2
                \\
                \begin{split}
                        &= \left\|y - \mathcal{H}^\theta u\right\|^2 
                        - 2 \left\langle y, \mathcal{R}^\gamma u \right\rangle
                        \\
                        &\quad
                        - 2 \left\langle \mathcal{H}^\theta u, \mathcal{R}^\gamma u \right\rangle
                        + \left\| \mathcal{R}^\gamma u\right\|^2
                \end{split}
                \\
                \intertext{By the zeroth-order incompatibility hypothesis,}
                &= \left\|y - \mathcal{H}^\theta u\right\|^2 
                        + \left\| \mathcal{R}^\gamma u\right\|^2,
        \end{align}
        which is minimized at \(\theta = \theta_0\) and \(\mathcal{R}^\gamma u = 0\).
\end{proof}

The first-order incompatibility property means that there is no need to iterate between fitting \(\hat\theta\) and \(\hat\gamma\); it is possible to estimate \(\hat \theta\) and then \(\hat \gamma\).
We make claims about local minima. If \(\ell\) happens to be convex, then local minimizers of \(\ell\) are also pseudo-true parameter values.
The physical and hybrid models have the same pseudo-true parameters.
\begin{fact}[First-order identification]
        \textbf{There exists} \(\hat \gamma \in {\Gamma}\) such that
        \begin{gather*}
                \hat \theta \operatorname*{\ locally\ minimizes\ }_{\theta \in \Theta} \left\|y - \del{\mathcal{H}^{\hat \theta}  + \mathcal{R}^{\hat \gamma}}u\right\|^2
                \\
                \implies
                \hat \theta \operatorname*{\ locally\ minimizes\ }_{\theta \in \Theta} \left\|y - \del{\mathcal{H}^{\hat \theta} }u\right\|^2.
        \end{gather*}  
\end{fact}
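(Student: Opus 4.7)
The plan is to notice that the existential quantifier over $\hat\gamma$ gives us complete freedom to pick any convenient residual parameter, and by assumption the model class $\mathcal{R}$ contains the zero model. So I would simply take $\hat\gamma = \gamma_0$, where $\gamma_0$ is the distinguished parameter satisfying $\mathcal{R}^{\gamma_0}[u] = 0$ for every input $u$, as stipulated in the notation section.

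With this choice the objective function in the hypothesis becomes
\begin{align*}
\|y - (\mathcal{H}^{\theta} + \mathcal{R}^{\gamma_0})u\|^2
&= \|y - \mathcal{H}^{\theta} u - 0\|^2 \\
&= \|y - \mathcal{H}^{\theta} u\|^2,
\end{align*}
which is literally (not just up to additive constants) the objective in the conclusion. Consequently the two functions of $\theta$ have identical local minimizers, and the implication holds trivially for any $\hat\theta$.

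I expect no real obstacle here, which is consistent with the statement being labeled an ``obvious fact'' and the footnote noting that it holds regardless of any incompatibility hypothesis. The only thing worth flagging is that the content of the statement is purely \emph{existential}: it merely records that one can always retreat to the unaugmented physical fit by zeroing the residual. The nontrivial counterpart, that \emph{every} best-fit hybrid parameter $\hat\theta$ agrees with a best-fit physics parameter (the ``for all'' version analogous to the zeroth-order theorem), is exactly what requires the first-order incompatibility property $T\mathcal{H} \perp_u \mathcal{R}$, and would be proved by differentiating the hybrid loss in $\theta$ and observing that the cross term $\langle \mathcal{R}^\gamma u, \partial_\theta \mathcal{H}^\theta u\rangle$ vanishes, so the first-order stationarity condition for the hybrid loss reduces to that of the physical loss.
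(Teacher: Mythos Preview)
Your proposal is correct and matches the paper's own proof exactly: the paper simply writes ``Take \(\gamma = \gamma_0\), the zero residual.'' Your additional remarks about the nontrivial ``for all'' counterpart are also accurate and anticipate precisely the content of Theorem~\ref{thm:first-order-incompatibility}.
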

\begin{proof}
        Take \(\gamma = \gamma_0\), the zero residual.
\end{proof}
\begin{theorem}[First-order incompatibility]
        \label{thm:first-order-incompatibility}
        Suppose furthermore that \(T\mathcal{H} \perp_u \mathcal{R}\)
        Then \textbf{for all} \(\hat \gamma \in {\Gamma}\),
        \begin{gather*}
                \hat \theta \operatorname*{\ locally\ minimizes\ }_{\theta \in \Theta} \left\|y - \del{\mathcal{H}^{\hat \theta}  + \mathcal{R}^{\hat \gamma}}u\right\|^2
                \\
                \implies
                \hat \theta \operatorname*{\ locally\ minimizes\ }_{\theta \in \Theta} \left\|y - \del{\mathcal{H}^{\hat \theta} }u\right\|^2.
        \end{gather*}  
\end{theorem}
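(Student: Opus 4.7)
The plan is to show that, for each fixed $\hat\gamma \in \Gamma$, the hybrid objective and the physics-only objective (viewed as functions of $\theta$) differ by a function whose $\theta$-derivative vanishes identically on $\Theta$, so that the two coincide locally up to an additive constant and therefore share all local minimizers.

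First I would set $\ell_{\mathrm{hyb}}(\theta) = \|y - (\mathcal{H}^\theta + \mathcal{R}^{\hat\gamma})u\|^2$ and $\ell_{\mathrm{phy}}(\theta) = \|y - \mathcal{H}^\theta u\|^2$, and expand the square as in the zeroth-order theorem. After cancellation, the only $\theta$-dependent term that separates $\ell_{\mathrm{hyb}}$ from $\ell_{\mathrm{phy}}$ is the cross term $2\langle \mathcal{H}^\theta u,\, \mathcal{R}^{\hat\gamma} u\rangle$; the pieces $\|\mathcal{R}^{\hat\gamma} u\|^2$ and $-2\langle y, \mathcal{R}^{\hat\gamma} u\rangle$ are constants with respect to $\theta$.

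Next I would differentiate the one $\theta$-dependent cross term. Assuming enough regularity that the $\theta$-derivative may be passed through the inner product (so that the $\partial_\theta \mathcal{H}^\theta u$ of Definition~\ref{def:first-order-incompatibility-witness} is a bona fide Gateaux derivative of $\theta \mapsto \mathcal{H}^\theta u$ into the $L^2$ space underlying $\langle \cdot, \cdot\rangle$), the first-order incompatibility hypothesis $T\mathcal{H} \perp_u \mathcal{R}$ applied at the pair $(\theta, \hat\gamma)$ gives $\partial_\theta \langle \mathcal{H}^\theta u, \mathcal{R}^{\hat\gamma} u\rangle = \langle \partial_\theta \mathcal{H}^\theta u, \mathcal{R}^{\hat\gamma} u\rangle = 0$ for every $\theta \in \Theta$. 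Hence $\partial_\theta(\ell_{\mathrm{hyb}} - \ell_{\mathrm{phy}}) \equiv 0$ on all of $\Theta$.

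On any path-connected open neighborhood of $\hat\theta$ in $\Theta$, this identity forces $\ell_{\mathrm{hyb}} - \ell_{\mathrm{phy}}$ to be constant, whence any local minimizer of $\ell_{\mathrm{hyb}}$ in that neighborhood is also a local minimizer of $\ell_{\mathrm{phy}}$. The only real obstacle is the regularity noted above; one must implicitly assume that $\Theta$ is (locally) an open subset of a Euclidean parameter space and that $\theta \mapsto \mathcal{H}^\theta u$ is differentiable in a sense compatible with the pointwise notation $\partial_\theta \mathcal{H}^\theta[u](t)$. Once that is granted, the argument is a parameterized version of the ``orthogonal error'' calculation used in the zeroth-order theorem and requires no further ingredients.
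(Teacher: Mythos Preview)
Your proposal is correct and follows essentially the same route as the paper: expand the squared norm, isolate the only $\theta$-dependent cross term $2\langle \mathcal{H}^\theta u,\mathcal{R}^{\hat\gamma} u\rangle$, and kill its $\theta$-derivative using $T\mathcal{H}\perp_u\mathcal{R}$. Your concluding step (the difference $\ell_{\mathrm{hyb}}-\ell_{\mathrm{phy}}$ is locally constant, so local minimizers coincide) is in fact slightly tighter than the paper's, which stops at matching first-order optimality conditions and thus, strictly speaking, only transfers stationary points.
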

\begin{proof}
        The first-order optimality conditions for least squares states that for any operator \(p\) representing a directional derivative\footnote{In the terminology of smooth manifolds, a tangent vector, hence the notation \(T\mathcal{H} \perp_u \mathcal{R}\).} with respect to \(\theta\),
        \begin{align}
                0 = p\ell(\theta, \gamma) 
                        &= p\left\|y - \del{\mathcal{H}^{\theta}  + \mathcal{R}^{\gamma}}u\right\|^2
                \\
                \begin{split}
                        &= p\left\|y - \mathcal{H}^\theta u\right\|^2 
                        - 2p \left\langle y, \mathcal{R}^\gamma u \right\rangle
                        \\
                        &\quad
                        - 2p \left\langle \mathcal{H}^\theta u, \mathcal{R}^\gamma u \right\rangle
                        + p\left\| \mathcal{R}^\gamma u\right\|^2
                \end{split}
                \\
                \intertext{By the product rule and first-order incompatibility hypothesis, \(p \left\langle \mathcal{H}^\theta u, \mathcal{R}^\gamma u \right\rangle = \left\langle p\mathcal{H}^\theta u, \mathcal{R}^\gamma u \right\rangle = 0\), hence}
                0 &= p\left\|y - \mathcal{H}^\theta u\right\|^2,
        \end{align}
        which is the first-order optimality condition for \(\theta\) to minimize \(\left\|y - \mathcal{H}^\theta u\right\|^2\).
\end{proof}

\section{Example}
\label{sec:example}
We will give an example of constructing a residual model with the requisite orthogonality properties to the following linearized physical model for a mechanical pendulum:
\begin{subequations}
\label{eq:physical-model}
\begin{align}
        \ddot {\hat x}(t) + 2 \zeta \omega_0 \dot {\hat x}(t) + \omega_0^2 \hat x(t) &= u(t), \quad 0 \leq t \leq T
        \\
        \hat y(t) &= \hat x(t)
\end{align}
\end{subequations}
where \(x\) is the angle from the lower equilibrium, the input \(u\) represents torque, and the output \(y\) is the horizontal position of the pendulum end.
The parameters are \(\zeta\), the damping ratio, and \(\omega_0\), the small-signal resonant frequency.
We assume that \(\omega_0\) is known (after all, it can be calculated from the mechanical properties of the system), but \(\zeta\) is unknown pending estimation.

The true system is nonlinear:\footnote{A common benchmark in hybrid modeling \cite{yin_augmenting_2021,dona_constrained_2022}.}
\begin{subequations}
\label{eq:real-system} 
\begin{align}
        \ddot x(t) + 2 \zeta \omega_0 \dot x(t) + \omega_0^2 \sin(x(t)) &= u(t) 
        \\ 
        y(t) &= \sin (x(t)) 
\end{align}         
\end{subequations}

\subsection{Residual model specification}
We design a residual model in Hammerstein form (static nonlinearity followed by a linear system) informed by the discussion in \S\ref{subsec:motivation} about harmonic generation.
Fix a characteristic amplitude \(\alpha > 0\) on which the residual model class and incompatibility model set depend.
Define the residual model  \(r = \mathcal{R}_\alpha^\gamma u\), \(\gamma = (A, B) \in \mathbb{R}^{m} \times \mathbb{R}^{m \times n}\) by
\begin{align*}
        \sum_{d = 1}^m A_d \dod[d]{}{t} r(t)
        &= \sum_{d = 1}^m \sum_{k = 1}^{n} B_{d, k} \dod[d]{}{t} \sbr{T_{2k + 1}(u(t)/\alpha)}
\end{align*}
where \(T_j\) is the Chebyshev polynomial of degree \(j\).
Because the mechanical system can be known physically to have mirror symmetry and the physical model is odd (negative \(u\) results in negative \(y\)),
we have constructed this residual model to be odd as well.

Define the witness class \(\tilde {\mathcal{U}}_\alpha\) by
\begin{align*}
        \tilde {\mathcal{U}} = \cbr{
                \alpha \cos(\omega t + \phi) : \omega > 0, \phi \in [0, 2\pi)
        }.
\end{align*}
For this construction, \(\mathcal{H}_\alpha\) and \(\mathcal{R}_\alpha\) need to have the same \(\alpha\).

\begin{proposition}
        Fix \(\alpha > 0\). For all \(u \in \tilde {\mathcal{U}}_\alpha\), \(\mathcal{H} \perp_u \mathcal{R}_\alpha\) and \(T\mathcal{H} \perp_u \mathcal{R}_\alpha\).
\end{proposition}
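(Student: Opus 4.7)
The plan is to exploit the fact that when $u \in \tilde{\mathcal{U}}_\alpha$ is a pure sinusoid at frequency $\omega$, both $\mathcal{H}^\theta u$ and the sensitivity $\partial_\theta \mathcal{H}^\theta u$ lie in the span of sinusoids at frequency $\omega$, whereas $\mathcal{R}_\alpha^\gamma u$ lives in the span of sinusoids at the disjoint set of frequencies $3\omega, 5\omega, \ldots, (2n+1)\omega$. Both incompatibility claims then fall out of the elementary identity $\langle \cos(\omega_1 t + \phi_1), \cos(\omega_2 t + \phi_2)\rangle = 0$ whenever $|\omega_1| \neq |\omega_2|$, where $\langle\cdot,\cdot\rangle$ is the time-averaged inner product defined in the paper.

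First I would analyze $\mathcal{H}^\theta u$. Because the physical model is LTI and strictly stable (the damping $\zeta > 0$ makes it BIBO stable), its response to $u(t) = \alpha \cos(\omega t + \phi)$ is, up to an exponentially decaying transient, the pure sinusoid $M(\omega;\theta)\,\alpha\cos(\omega t + \phi + \psi(\omega;\theta))$ at the driving frequency. The transient contributes zero to the Cesàro-type time average as $T \to \infty$ and can be discarded. Differentiating with respect to the scalar parameter $\theta = \zeta$ produces a linear combination of $\cos(\omega t + \phi + \psi)$ and $\sin(\omega t + \phi + \psi)$, still confined to frequency $\omega$. Next I would analyze $\mathcal{R}_\alpha^\gamma u$: the Chebyshev identity $T_j(\cos x) = \cos(j x)$, applied at $x = \omega t + \phi$, turns each $T_{2k+1}(u(t)/\alpha)$ into a pure sinusoid at frequency $(2k+1)\omega$. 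Since the defining equation of $\mathcal{R}_\alpha^\gamma$ is itself LTI in $r$, its steady-state output is a linear combination of sinusoids at exactly the frequencies $3\omega, 5\omega, \ldots, (2n+1)\omega$.

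With these two spectral descriptions in hand, the orthogonality computations underlying $\mathcal{H} \perp_u \mathcal{R}_\alpha$ and $T\mathcal{H} \perp_u \mathcal{R}_\alpha$ reduce to a finite frequency-by-frequency check that collapses term by term, using that $\omega \notin \{3\omega,5\omega,\ldots,(2n+1)\omega\}$. The main obstacle is a mild well-posedness issue rather than a conceptual one: the operator $\sum_{d=1}^{m} A_d (d/dt)^d$ defining $\mathcal{R}_\alpha^\gamma$ has no zeroth-order term, so its characteristic polynomial vanishes at $s = 0$ and the steady-state solution is only determined up to an integration constant (the DC homogeneous mode). Since every forcing harmonic $(2k+1)\omega$ is nonzero, this ambiguity is harmless: one fixes the convention that $\mathcal{R}_\alpha^\gamma u$ is the purely oscillatory particular solution, and the DC freedom (if retained) has trivial $\omega$-frequency content and therefore contributes nothing to either inner product against $\mathcal{H}^\theta u$ or $\partial_\theta \mathcal{H}^\theta u$.
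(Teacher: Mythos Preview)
Your proposal is correct and follows essentially the same approach as the paper's proof: identify $\mathcal{H}^\theta u$ and $\partial_\theta \mathcal{H}^\theta u$ as sinusoids at frequency $\omega$ via LTI theory, identify $\mathcal{R}_\alpha^\gamma u$ as a superposition of sinusoids at odd multiples $(2k+1)\omega$ via the Chebyshev identity $T_j(\cos x)=\cos(jx)$, and conclude both orthogonality claims from the vanishing of the time-averaged inner product between sinusoids at distinct frequencies. Your extra remarks about the decaying transient and the DC ambiguity in the residual ODE are legitimate clarifications that the paper glosses over, but they do not change the argument's structure.
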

\begin{proof}
        Let \(u(t) =  \alpha \cos(\omega t + \phi)\).
        For \(\hat y = \mathcal{H}^\theta_\alpha u\) and \(r = \mathcal{R}_\alpha^\gamma u\),
        we claim there exist values of the subscripted constants \(\alpha\) and \(\beta\) such that the following three identities hold.
        By linear systems theory,
        \begin{subequations}
        \begin{align}
        \label{eq:physical-model-solution}
                \hat y(t) &= \alpha_1 \cos(\omega t + \phi_1).
                \intertext{By differentiating \eqref{eq:physical-model}, we have}
                \dpd{}{\theta} \hat y(t) &= \alpha_p \cos(\omega t + \phi_p).\label{eq:physical-model-derivative}
                \intertext{By the property \(T_\ell \cos(\theta) = \cos(\ell\theta)\) of the Chebyshev polynomials,}
                r(t) &=
                \sum_{k = 1}^m \alpha_{2k + 1} \cos((2k + 1) \omega + \phi_{2k + 1}).\label{eq:residual-model-solution}
        \end{align}
        \end{subequations}

        Finally, for all \(\omega' \neq \omega\) and \(\phi', \psi'\),
        \begin{align}\label{eq:cosine-orthogonality}
                \lim_{T \to \infty} \frac{1}{T} \int_0^T \cos(\omega t + \phi') \cos(\omega' t + \psi') \dif t
                 = 0.
        \end{align}

        It follows from \eqref{eq:physical-model-solution}, \eqref{eq:residual-model-solution}, and \eqref{eq:cosine-orthogonality} that \(\mathcal{H} \perp_u \mathcal{R}_\alpha\).
        It follows from \eqref{eq:physical-model-derivative}, \eqref{eq:residual-model-solution}, and \eqref{eq:cosine-orthogonality} that \(T\mathcal{H} \perp_u \mathcal{R}_\alpha\).
\end{proof}

\subsection{Numerical results}
\begin{table}
\centering
\begin{tabular}{llr}
        Constant & Meaning & Value
        \\\hline
        \(\zeta\) & small-signal damping ratio & \(0.5\)
        \\
        \(\omega_0\) & small-signal resonant frequency & \(2.0\)
        \\
        \(T\) & experiment end time & \(80\)
        \\
        \(\omega\) & test frequency for \(\zeta\) estimation & \(1\)
        \\
        \hline
        \(d\) & LTI order of residual model & 5 \\
        \(n\) & number of Chebyshev polynomials & 6
        \\
\end{tabular}
\caption{\label{table:constants}}
\end{table}
In this section, we demonstrate two instances of the end-to-end hybrid modeling workflow.
The ``medium-signal'' example uses an input amplitude \(\alpha = 4\), which is large enough to excite dynamic and output nonlinearities in the real system.
The ``large-signal'' example uses an input amplitude \(\alpha = 10\), which is enough to excite qualitatively different behavior from the linearized ``physical'' model.
``Mismatch norm'' refers to the \(L^2\) norm (root mean square) of the difference of two signals, normalized by the \(L^2\) norm of the real system output \(y\).
\begin{enumerate}
        \item \textbf{Fit the physical model.} Apply a known sinusoidal input and the Prediction Error Method to estimate \(\zeta\) in \eqref{eq:physical-model}. Call this estimate \(\hat \zeta\).
        \begin{itemize}
                \item medium-signal: Fig.~\ref{figure:physical-id-alpha-med}, large-signal: Fig.~\ref{figure:physical-id-alpha-10.0}
        \end{itemize}
        \item \textbf{Fit the residual model.} Apply a pseudorandom input to obtain an empirical residual between the eal system and the physical model. Fit the LTI parameters of the residual model by the Prediction Error Method.
        \begin{itemize}
                \item medium-signal: Fig.~\ref{figure:residual-id-alpha-med}, large-signal: Fig.~\ref{figure:residual-id-alpha-10.0}
        \end{itemize}
        \item \textbf{Validate the hybrid model.} Apply a different pseudorandom input sequence and compare the hybrid model's predictions to the real system's output.
        \begin{itemize}
                \item medium-signal: Fig.~\ref{figure:validation-alpha-med}, large-signal: Fig.~\ref{figure:validation-alpha-10.0}
        \end{itemize}
\end{enumerate} 

In the medium-signal regime, the hybrid model reduces the mismatch norm of the physical model by a factor of roughly two in both training (from 12.08\% to 5.24\%) and validation (from 12.45\% to 6.25\%).
There is a convincing fit between the actual residual \(y - \hat y\) and the predicted residual \(r\) on pseudorandom validation data (Fig.~\ref{figure:validation-alpha-med}).
This suggests that odd harmonic content arising both from the nonlinear dynamics and the output nonlinearity are faithfully captured by the residual model's Hammerstein form.

To verify first-order incompatibility (Thm.~\ref{thm:first-order-incompatibility}) we used the Prediction Error Method to re-estimate \(\hat \zeta\) on the residual-compensated signal \(y - r\).
The iterated estimate differs by -0.00030 or -0.047\%,
theoretically be zero for \(T \to \infty\),
but close enough to be regarded as practically convergent.
One-step convergence is an improvement upon
alternating physical and residual estimates, which are not even certain to converge.

In the large-signal regime, we note that the both the physical and the hybrid model have mismatch close to 90\% in both training and validation.
We view this failure of approximation as a good thing and attribute it to the orthogonality design of \(\mathcal{H}_\alpha\) and \(\mathcal{R}_\alpha\).
The hybrid model has been effectively regularized \emph{against} fitting the real signal when the physical model is inadequate.

There is a mechanical explanation for why the linear model is ineffective for large values of \(\alpha\).
Physically speaking, the pendulum swings up and over its highest point, generating qualitatively different dynamics than the linearization can predict.
The real system \ref{eq:real-system} has an equilibrium for constant \(u(t) = u_0\) only in the range \(u_0 \in [-\omega_0^2, \omega_0^2]\).
On the other hand, the linearized system \ref{eq:physical-model} has the same forced equilibria with no qualification on \(u_0\).

This ablation study uncovers how our physics-informed learning system behaves when it does not have the correct physics information.
We confront the potentially uncomfortable fact that \eqref{eq:physical-model} is not the right physics for physics-informed learning.
If \(\mathcal{R}\) were a universal function approximation class, it would be possible to get an immaculate fit \(y \approx \hat y + r\) in training and validation.
But to do so would be an exercise in physics-\emph{misinformed} learning.


Constants associated with the physical and residual models can be found in in Table~\ref{table:constants}.
Estimation methods and implementation details can be found in Appendix~\ref{section:methods}.

\begin{figure}
        \includegraphics[width=\linewidth]{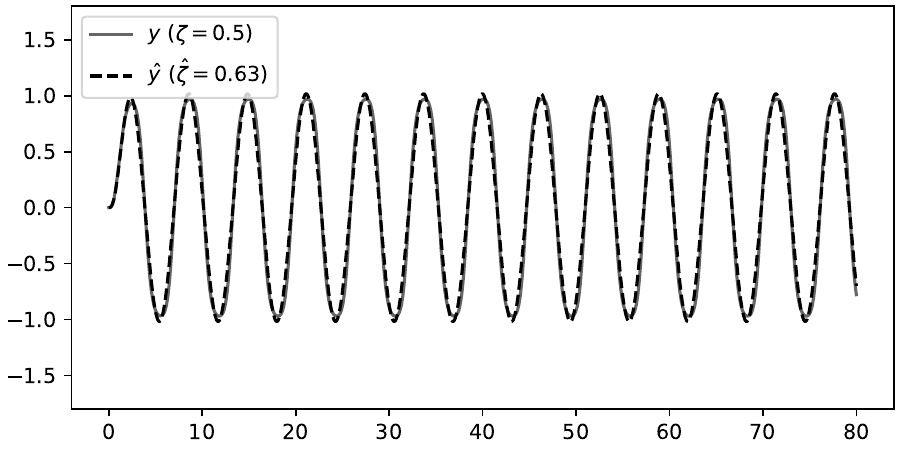}
        \caption{\label{figure:physical-id-alpha-med} Real trajectory and physical model identification at \(\alpha = 4\).
        Mismatch norm is 12.01\%.}
\end{figure}

\begin{figure}
        \includegraphics[width=\linewidth]{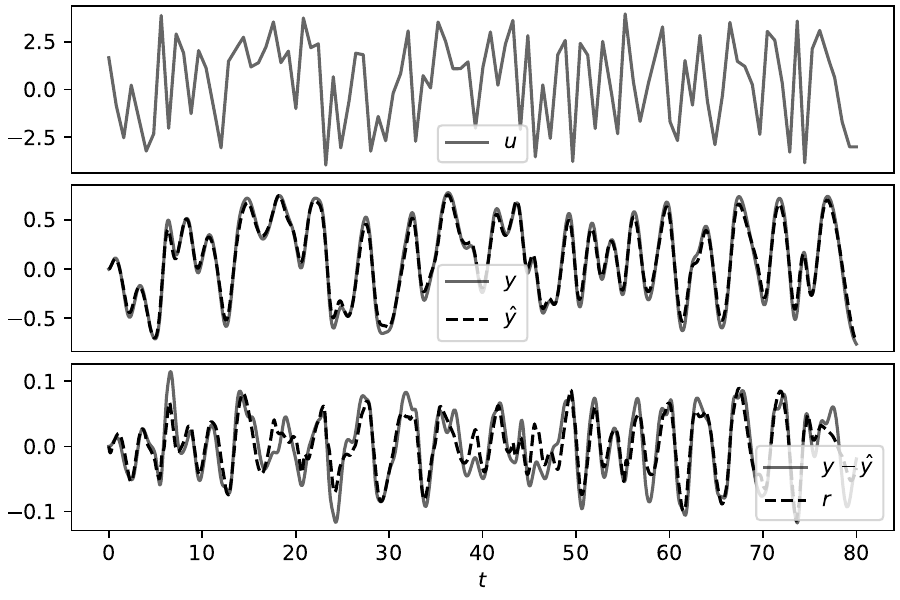}
        \caption{\label{figure:residual-id-alpha-med} Real trajectory and residual model identification at \(\alpha = 4\).
        Physical model mismatch norm is 12.08\%; hybrid model mismatch norm is 5.24\%.}
\end{figure}

\begin{figure}
        \includegraphics[width=\linewidth]{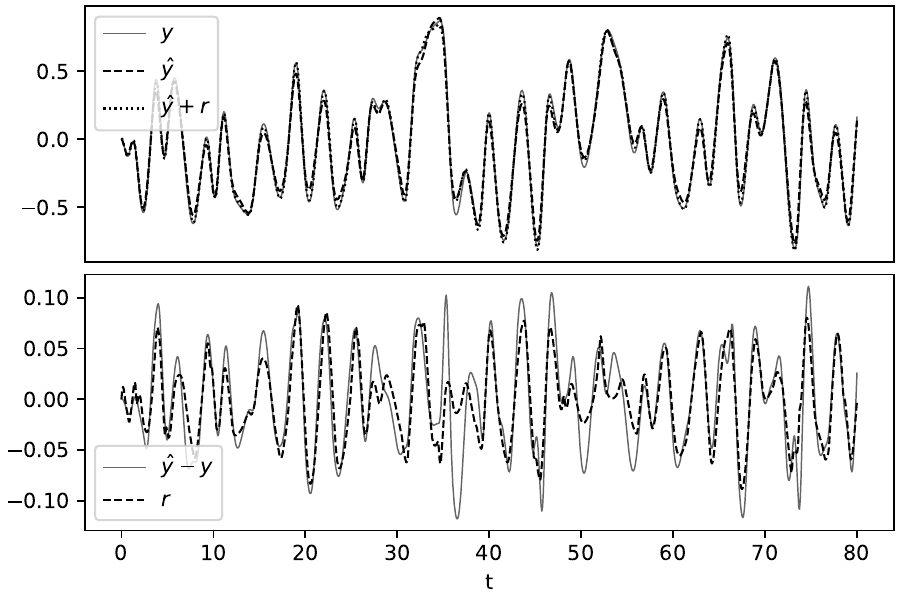}
        \caption{\label{figure:validation-alpha-med} Real trajectory and hybrid model on a pseudorandom validation input at \(\alpha = 4\).
        Physical model mismatch norm is 12.45\%; hybrid model mismatch norm is 6.25\%.}
\end{figure}

\begin{figure}
        \includegraphics[width=\linewidth]{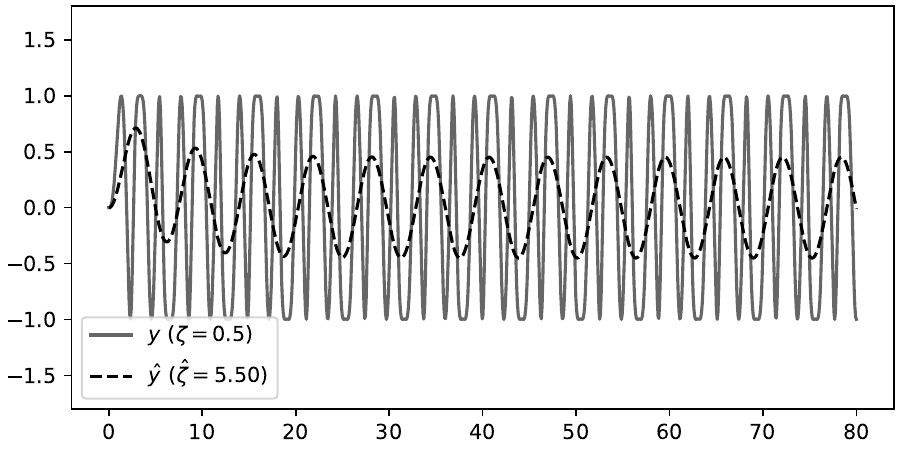}
        \caption{\label{figure:physical-id-alpha-10.0} Real trajectory and physical model identification at \(\alpha = 10\).
        Mismatch norm is 90.71\%.}
\end{figure}

\begin{figure}
        \includegraphics[width=\linewidth]{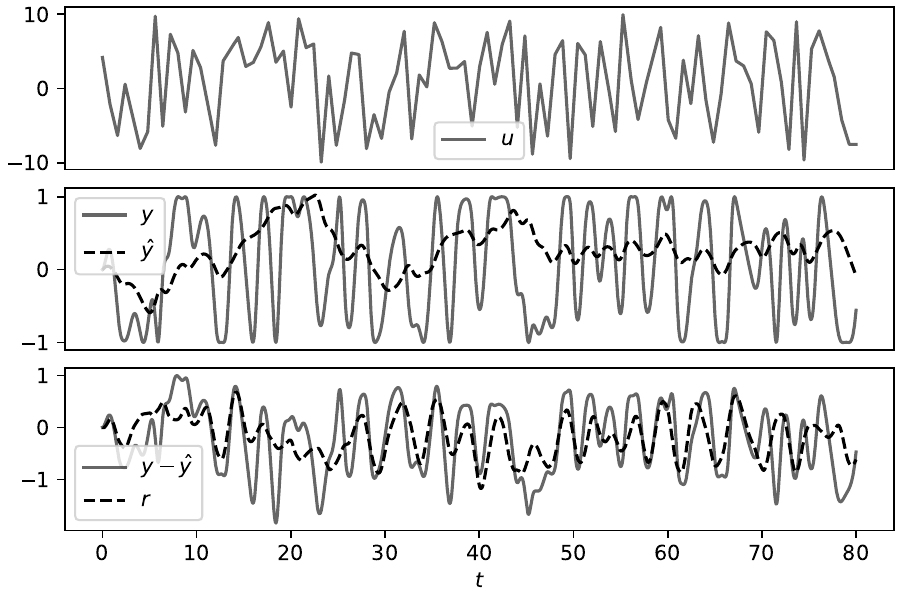}
        \caption{\label{figure:residual-id-alpha-10.0} Real trajectory and residual model identification at \(\alpha = 10\).
        Physical model mismatch norm is 99.84\%; hybrid model mismatch norm is 76.42\%.}
\end{figure}

\begin{figure}
        \includegraphics[width=\linewidth]{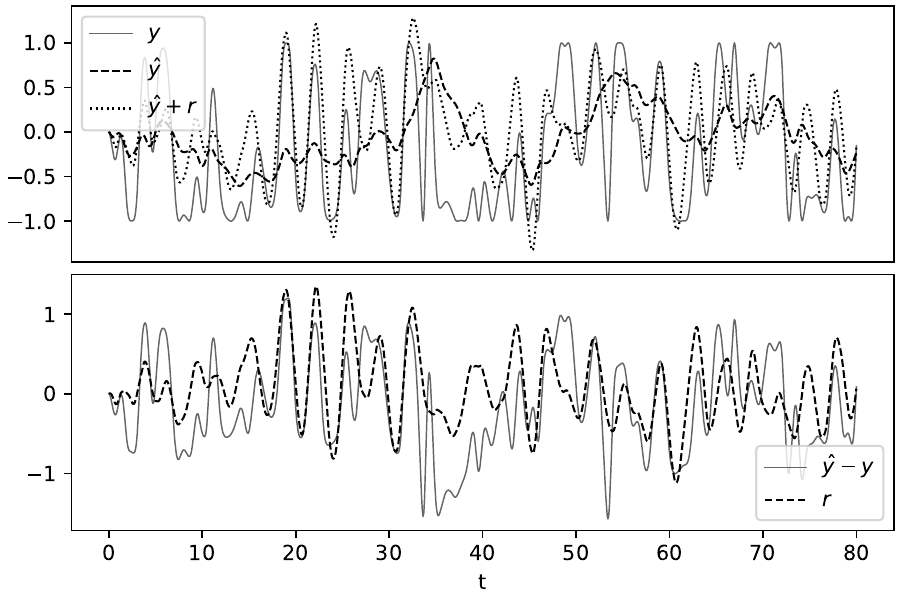}
        \caption{\label{figure:validation-alpha-10.0} Real trajectory and hybrid model on a pseudorandom validation input at \(\alpha = 10\).
        Physical model mismatch norm is 88.76\%; hybrid model mismatch norm is 81.83\%.}
\end{figure}

\section{Conclusion}
Zeroth-order incompatibility theoretically guarantees that a hybrid model is \emph{not} robust to physics mis-information.
First-order incompatibility guarantees one-step convergence of the hybrid model.
These criteria certify the trustworthiness and attribution of physics-augmented learning.

\addtolength{\textheight}{0cm}   



\section*{APPENDIX}

\section{Methods}
\label{section:methods}
We generated pseudorandom inputs by linearly interpolating independent \(\text{Uniform}(-\alpha, \alpha)\) random variables at a period of 0.8.
We implemented the Prediction Error Method for \(\hat\zeta\) by using a bisection-based scalar minimization routine initialized using the modulating function method \cite{niethammer_parameter_2001}.
We integrated the ordinary differential equation using Diffrax\cite{kidger_neural_2022} with a 5th order implicit solver.
We implemented the Prediction Error Method for the residual using a gradient-based optimizer.




\bibliography{IEEEabrv,export}
\bibliographystyle{IEEEtran}

\end{document}